\newtheorem{theorem}{Theorem}
\newtheorem{lemma}{Lemma}
\newtheorem{remark}{Remark}
\newtheorem{corollary}{Corollary}
\newtheorem{definition}{Definition}
\newtheorem{assumption}{Assumption}
\def \E {\mathop{{}\mathbb{E}}}
\def \scriptZ {\mathcal{Z}}
\def \scriptW {\mathcal{W}}
\def \real {\mathbb{R}}
\def \mprob {\mathbb{P}}
\begin{document}

\begin{center}

	{\bf{\LARGE{Generalization Error Bounds for Noisy, Iterative Algorithms}}}

	\vspace*{.25in}

	\begin{tabular}{ccccc}
		{\large{Ankit Pensia}$^*$} & \hspace*{.75in} & {\large{Varun Jog$^\dagger$}} & \hspace*{.75in} & {\large{Po-Ling Loh$^{*\dagger}$}}\\
		{\large{\texttt{ankitp@cs.wisc.edu}}} & \hspace*{.75in} & {\large{\texttt{vjog@ece.wisc.edu}}} & \hspace*{.75in} & {\large{\texttt{loh@ece.wisc.edu}}}
			\end{tabular}
\begin{center}
Departments of Computer Science$^*$  and Electrical \& Computer Engineering$^\dagger$\\
University of Wisconsin - Madison\\
1415 Engineering Drive\\
Madison, WI 53706
\end{center}

	\vspace*{.2in}

	January 2018

	\vspace*{.2in}

\end{center}

\begin{abstract}
In statistical learning theory, generalization error is used to quantify the degree to which a supervised machine learning algorithm may overfit to training data. Recent work [Xu and Raginsky (2017)] has established a bound on the generalization error of empirical risk minimization based on the mutual information $I(S;W)$ between the algorithm input $S$ and the algorithm output $W$, when the loss function is sub-Gaussian. We leverage these results to derive generalization error bounds for a broad class of iterative algorithms that are characterized by bounded, noisy updates with Markovian structure. Our bounds are very general and are applicable to numerous settings of interest, including stochastic gradient Langevin dynamics (SGLD) and variants of the stochastic gradient Hamiltonian Monte Carlo (SGHMC) algorithm. Furthermore, our error bounds hold for any output function computed over the path of iterates, including the last iterate of the algorithm or the average of subsets of iterates, and also allow for non-uniform sampling of data in successive updates of the algorithm.
\end{abstract}

\section{Introduction}

Many popular machine learning applications may be cast in the framework of empirical risk minimization (ERM)~\cite{Vap98, ShaBen14}. This risk is defined as the expected value of an appropriate loss function, where the expectation is taken over a population. Rather than minimizing the risk directly, ERM proceeds by minimizing the empirical average of the loss function evaluated on the finite sample of data points contained in the training set~\cite{ShaEtal10}. In addition to obtaining a computationally efficient, near-optimal solution to the ERM problem, it is therefore necessary to quantify how much the empirical risk deviates from the true risk of the loss function, which in turn dictates the closeness of the ERM estimate to the underlying parameter of the data-generating distribution.

In this paper, we focus on a family of iterative ERM algorithms, and derive generalization error bounds for the parameter estimates obtained from such algorithms. A unifying characteristic of the iterative algorithms considered in our paper is that each successive update includes the addition of noise, which prevents the learning algorithm from overfitting to the training data. Furthermore, the iterates of the algorithm are related via a Markov structure, and the difference between successive updates (disregarding the noise term) is assumed to be bounded. One popular learning algorithm of this nature is stochastic gradient Langevin dynamics (SGLD)---which may be viewed as a version of stochastic gradient descent (SGD) that injects Gaussian noise at each iteration---applied to a loss function with bounded gradients. Our approach leverages recent results that bound the generalization error using the mutual information between the input data set and the output parameter estimates~\cite{RusZou16, XuRag17}. Importantly, this technique allows us to apply the chain rule of mutual information and leads to a simple analysis that extends to estimates that are obtained as an arbitrary function of the iterates of the algorithm. The sampling strategy may also be data-dependent and allowed to vary over time, but should be agnostic to the parameters. 

Generalization properties of SGD have recently been derived using a different approach involving algorithmic stability~\cite{HarEtal16, Lon16}. The main idea is that learning algorithms that change by a small bounded amount with the addition or removal of a single data point must also generalize fairly well~\cite{BouEli02, EliEtal05, MukEtal06}. However, the arguments employed to show that SGD is a stable algorithm crucially rely on the fact that the updates are obtained using bounded gradient steps. Mou et al.~\cite{MouEtal17} provide generalization error bounds for SGLD by relating stability to the squared Hellinger distance, and bounding the latter quantity. Although their generalization error bounds are tighter than ours in certain cases, our approach based on a purely information-theoretic notion of stability (i.e., mutual information) allows us to consider much more general classes of updates and final outputs, including averages of iterates; furthermore, the algorithms analyzed in our framework may perform iterative updates with respect to a non-uniform sampling scheme on the training data set.

The remainder of the paper is organized as follows: In Section~\ref{SecSetting}, we introduce the notation and assumptions to be used in our paper. In Section~\ref{SecMain}, we present the main result bounding the mutual information between inputs and outputs for our class of iterative learning algorithms, and derive generalization error bounds in expectation and with high probability. In Section~\ref{SecExamples}, we provide illustrative examples bounding the generalization error of various noisy algorithms. We conclude with a discussion of related open problems. Detailed proofs of supporting lemmas are contained in the Appendix.



\section{Problem setting}
\label{SecSetting}

We begin by fixing some notation to be used in the paper, and then introduce the class of learning algorithms we will study. We write $\|\cdot\|_2$ to denote the Euclidean norm of a vector. For a random variable $X$ drawn from a distribution $\mu$, we use $\E_{X \sim \mu}$ to denote the expectation taken with respect to $X$. We use $\mu^{\otimes n}$ to denote the product distribution constructed from $n$ independent copies of $\mu$. We write $I_d$ to denote the $d$-dimensional identity matrix. 


\subsection{Preliminaries}

Suppose we have an \emph{instance space} $\scriptZ$ and a \emph{hypothesis space} $\scriptW$ containing the possible parameters of a data-generating distribution. We are given a training data set $S = \{z_1, z_2, \dots, z_n\}$ drawn from $\scriptZ$, where $z_i \stackrel{i.i.d.}{\sim} \mu$. Let $\ell: \scriptW \times \scriptZ \rightarrow \real$ be a fixed loss function. We wish to find a parameter $w \in \mathbb{R}^d$ that minimizes the \emph{risk} $L_\mu$, defined by
\begin{equation*}
L_{\mu}(w) :=  \E_{Z \sim \mu} [\ell(w, Z)].
\end{equation*}
For example, the setting of linear regression corresponds to case where $\scriptZ = \real^d \times \real$ and $z_i = (x_i, y_i)$, where each $x_i \in \real^d$ is a covariate and $y_i \in \real$ is the associated response. Furthermore, using the loss function $\ell(w, z) = (y - x^T w)^2$ corresponds to a least squares fit.

In the framework of ERM, we are interested in the \emph{empirical risk}, defined to be the empirical average of the loss function computed with respect to the training data:
\begin{align*}
L_S(w) := \frac{1}{n} \sum_{i=1}^n \ell(w, z_i).
\end{align*}
A learning algorithm may be viewed as a channel that takes the data set $S$ as an input and outputs an estimate $W$ from a distribution $\mprob_{W|S}$. In canonical ERM, where $W$ is simply the minimizer of $L_S(w)$ in $\scriptW$, the conditional distribution $\mprob_{W|S}$ is degenerate; however, when a stochastic algorithm is employed to minimize $L_S(w)$, the distribution $\mprob_{W|S}$ may be non-degenerate (and convergent to a delta mass at the true data-generating distribution if the algorithm is consistent).

For an estimation algorithm characterized by the distribution $\mprob_{W|S}$, we define the \emph{generalization error} to be the expected difference between the empirical risk and the actual risk, where the expectation is taken with respect to both the data set $S \sim \mu^{\otimes n}$ and the randomness of the algorithm:
\begin{equation*}
\text{gen}(\mu, \mprob_{W|S}) := \E_{S \sim \mu^{\otimes n}, W \sim \mprob_{W|S}}[L_{\mu}(W) - L_S(W)].
\end{equation*}
The \emph{excess risk}, defined as the difference between the expected loss incurred by the algorithm and the true minimum of the risk, may be decomposed as follows:
\begin{align*}
\E_{S \sim \mu^{\otimes n}, W \sim \mprob_{W|S}} [L_{\mu}(W)] - L_{\mu}(w^*) = \text{gen}(\mu, \mprob_{W|S}) 
+ \left(\E[L_S(W)] - L_{\mu}(w^*)\right),
\end{align*}
where $w^* := \arg\min_{w \in \scriptW} \E_{Z \sim \mu}[\ell(w, Z)]$. Furthermore, it may be shown (cf.\ Lemma 5.1 of Hardt et al.~\cite{HarEtal16}) that $\E[L_S(w^*_S)] \le L_{\mu}(w^*)$, where $w^*_S := \arg\min_{w \in \scriptW} L_S(w)$ is the true empirical risk minimizer. Hence, we have the bound
\begin{equation}
\label{EqnGenOpt}
\E_{S \sim \mu^{\otimes n}, W \sim \mprob_{W|S}} [L_{\mu}(W)] - L_{\mu}(w^*) \le |\text{gen}(\mu, \mprob_{W|S})| + \epsilon_{\text{opt}}^W,
\end{equation}
where
\begin{equation*}
\epsilon_{\text{opt}}^W := \left|\E[L_S(W)] - \E[L_S(w^*_S)]\right|
\end{equation*}
denotes the \emph{optimization error} incurred by the algorithm in minimizing the empirical risk.


\subsection{Generalization error bounds}

The idea of bounding generalization error by the mutual information $I(W;S)$ between the input and output of an ERM algorithm was first proposed by Russo and Zou~\cite{RusZou16} and further investigated by Xu and Raginsky~\cite{XuRag17}. We now describe their results, which will be instrumental in our work. Recall the following definition:

\begin{definition}
A random variable $X$ is \emph{$R$-sub-Gaussian} if the following inequality holds:
\begin{align*}
\E[\exp(\lambda(X - \E X))] \leq \exp\left(\frac{\lambda^2 R^2}{2}\right), \qquad \forall \lambda \in \real.
\end{align*}
\end{definition}
We will assume that the loss function is uniformly sub-Gaussian in the second argument over the space $\scriptW$:
\begin{assumption}
Suppose $\ell(w, Z)$ is $R$-sub-Gaussian with respect to $Z \sim \mu$, for every $w \in \scriptW$.
\end{assumption}
In particular, if $\mu$ is Gaussian and $\ell(w, Z)$ is Lipschitz, then $\ell(w, Z)$ is known to be sub-Gaussian~\cite{BouEtal13}. Under this assumption, we have the following result:
\begin{lemma}[Theorem 1 of Xu and Raginsky~\cite{XuRag17}]
\label{LemXuRag}
Under Assumption 1, the following bound holds:
\begin{align}
    |\text{gen}(\mu, \mprob_{W|S})| &\leq \sqrt{\frac{2R^2}{n}I(S;W)}.
    \label{eq:miGeneralization}
\end{align}
\end{lemma}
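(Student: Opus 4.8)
The plan is to bound the generalization error by a change-of-measure argument that exploits the fact that $\text{gen}(\mu,\mprob_{W|S})$ compares the expectation of the empirical risk under the joint law against its expectation under the product of the marginals. Since $\E_{S}[L_S(w)] = L_\mu(w)$ for every fixed $w$ and the marginal law of $S$ is $\mu^{\otimes n}$, we have $\E_{S\sim\mu^{\otimes n},\,W\sim\mprob_{W|S}}[L_\mu(W)] = \E_{W}[L_\mu(W)]$, so $\text{gen}(\mu,\mprob_{W|S})$ is precisely the gap between the decoupled and the coupled expectations of the single function $g(s,w) = L_s(w)$. The tool that converts such a gap into mutual information is the Donsker--Varadhan variational representation of the Kullback--Leibler divergence, paired with a sub-Gaussian control of $g$.

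First I would apply Donsker--Varadhan \emph{conditionally}: for each fixed $s$ and each $\lambda \in \real$, inserting the test function $w \mapsto \lambda\bigl(L_\mu(w) - L_s(w)\bigr)$ into the variational formula for $D(\mprob_{W|S=s}\,\|\,\mprob_W)$ yields
\[
\lambda\,\E_{W\sim\mprob_{W|S=s}}\bigl[L_\mu(W) - L_s(W)\bigr] \le D(\mprob_{W|S=s}\,\|\,\mprob_W) + \log\E_{W\sim\mprob_W}\bigl[e^{\lambda(L_\mu(W) - L_s(W))}\bigr].
\]
Taking the expectation over $S \sim \mu^{\otimes n}$ turns the left-hand side into $\lambda\cdot\text{gen}(\mu,\mprob_{W|S})$ and the first term on the right into $\E_S[D(\mprob_{W|S}\,\|\,\mprob_W)] = I(S;W)$. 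For the remaining log-moment term I would use Jensen's inequality to pull $\E_S$ inside the logarithm and then Fubini to integrate over $S$ first. The crucial observation is that for each fixed $w$ the centered empirical risk $L_S(w) - L_\mu(w) = \frac{1}{n}\sum_{i=1}^n\bigl(\ell(w,z_i) - L_\mu(w)\bigr)$ is an average of $n$ independent, zero-mean, $R$-sub-Gaussian terms (Assumption 1), hence $R/\sqrt{n}$-sub-Gaussian with a constant uniform in $w$; this gives $\E_S[e^{\lambda(L_\mu(w) - L_S(w))}] \le e^{\lambda^2 R^2/(2n)}$ for every $w$, and therefore the whole log-moment term is at most $\lambda^2 R^2/(2n)$. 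Combining, we obtain $\lambda\cdot\text{gen} \le I(S;W) + \lambda^2 R^2/(2n)$ for all $\lambda$; optimizing over $\lambda > 0$ produces the stated bound, and repeating with the opposite sign supplies the matching lower bound, hence the absolute value.

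The main obstacle, and the reason for the conditional-then-Jensen ordering, is that the naive decoupling estimate --- which would require $g(S,W) = L_S(W)$ to be $R/\sqrt{n}$-sub-Gaussian under the product law $\mprob_S \otimes \mprob_W$ --- is simply not available: although $L_S(w)$ is sub-Gaussian for each fixed $w$, the population risk $L_\mu(W)$ viewed as a function of the random $W$ carries its own uncontrolled fluctuations, so $L_S(W)$ need not be sub-Gaussian once $W$ is random. Conditioning on $S$ before bounding, and only afterwards averaging through Jensen and Fubini, sidesteps this entirely: inside the expectation $w$ is frozen, the sub-Gaussian parameter $R/\sqrt{n}$ is uniform in $w$, and the problematic $L_\mu$-fluctuation is absorbed into the centering and cancels. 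The only remaining technical care is verifying additivity of sub-Gaussian parameters under averaging and checking the integrability needed for Fubini and the variational formula, both of which are routine.
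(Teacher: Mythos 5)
The paper does not actually prove this lemma---it is imported verbatim as Theorem~1 of Xu and Raginsky---so there is no internal proof to compare against. Your argument is a correct, self-contained reconstruction of the standard proof of that result (Donsker--Varadhan applied conditionally on $S=s$, averaging to produce $I(S;W)$, and the $R/\sqrt{n}$-sub-Gaussianity of the centered empirical risk uniformly in $w$ to control the log-moment term, followed by optimization over $\lambda$ of both signs), and your closing remark about why the sub-Gaussian control must be applied conditionally in $w$ rather than jointly under the product measure correctly identifies the one genuine subtlety in the decoupling step.
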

In other words, the generalization error is controlled by the mutual information, supporting the intuition that an algorithm without heavy dependence on the data will avoid overfitting.


\subsection{Class of learning algorithms}


We now define the types of ERM algorithms to be studied in our paper. We will focus on algorithms that proceed by iteratively updating a parameter estimate based on samples drawn from the data set $S$. Our theory is applicable to algorithms that make noisy, bounded updates on each step, such as the SGLD algorithm applied to a loss function with uniformly bounded gradients.

Denote the parameter vector at iterate $t$ by $W_t \in \mathbb R^d$, and let $W_0 \in \scriptW$ denote an arbitrary initialization. At each iteration $t \ge 1$, we sample a data point $Z_t \subseteq S$ and compute a direction $F(W_{t-1}, Z_t) \in \real^d$. We then scale the direction vector by a stepsize $\eta_t$ and perturb it by isotropic Gaussian noise $\xi_t \sim N(0, \sigma_t^2 I_d)$, to obtain the overall update
\begin{align}
     W_{t} = g(W_{t-1}) - \eta_t F(W_{t-1}, Z_t) + \xi_t, \qquad \forall t \ge 1,
     \label{eq:updateEqn} 
 \end{align} 
where $g: \real^d \rightarrow \real^d$ is a deterministic function. An important special case is when $g$ is the identity function and $F$ is a (clipped) gradient of the loss function: $F(w,z) = \nabla_w \ell(w,z)$. This leads to the familiar updates of the SGLD algorithm~\cite{WelTeh11}. For examples of settings where $g$ is a non-identity function, see the discussion of momentum and accelerated gradient methods in Section~\ref{SecExamples} below.

\begin{remark}
\label{RemNoise}
Our analysis does not actually require the noise vectors $\{\xi_t\}$ to be Gaussian, as long as they are drawn from a continuous distribution. The proofs would continue to hold with minimal modification, but would lead to sub-optimal bounds---indeed, a careful examination of our proofs shows that Gaussian noise produces the tightest bounds, because Gaussian noise has the maximum entropy for a fixed variance. Our results also generalize to settings where $Z_t$ may be a collection of data points drawn from $S$ and $F$ is computed with respect to all the data points (e.g., a mini-batched version of SGD), provided the sampling strategy satisfies the Markov structure imposed by Assumption~\ref{AssSamp} below.
\end{remark}

For $t \ge 0$, let $W^{(t)} := (W_1, \hdots, W_t)$ and $Z^{(t)} := (Z_1, \hdots, Z_t)$. We impose the following assumptions on $g$, $F$, and the dependency structure between the $W$'s and $Z$'s:
\begin{assumption}
\label{AssBound}
The updates are bounded; i.e., $\sup_{w \in \scriptW, z \in \scriptZ} \|F(w,z)\|_2 \leq L$, for some $L > 0$.
\end{assumption}

\begin{assumption}
\label{AssSamp}
The sampling strategy is agnostic to the previous iterates of the parameter vectors: 
\begin{align}
\label{eq:samplingConditionUpdated}
\mprob(Z_{t+1} \mid Z^{(t)}, W^{(t)},S) = \mathbb \mprob(Z_{t+1} | Z^{(t)},S).                                    
\end{align}
\end{assumption}


Note that the update equation~\eqref{eq:updateEqn} implies that $\mprob(W_{t+1}| W^{(t)}, Z^{(t+1)},S) = \mprob(W_{t+1} | W_t, Z_{t+1})$, which combined with the sampling strategy~\eqref{eq:samplingConditionUpdated} implies the following conditional independence relation:
\begin{align} 
\label{eq:samplingCondition}
\mprob\left(W_{t+1} | W^{(t)}, Z^{(T)}, S\right) &= \mprob\left(W_{t+1} | W_t, Z_{t+1} \right),
\end{align}
where $T$ denotes the final iterate. We may represent the dependence structure defined by our class of algorithms in the form of a graphical model (see Figure~\ref{fig:markov} in the Appendix).

\begin{remark}
Importantly, we do not impose any further restrictions on the form of the updates or the sampling strategy; in particular, $Z_t$ need not be drawn uniformly from the data set $S$, and may even depend on past iterates $\{Z_s\}_{s<t}$, as in the case of sampling without replacement. Some examples of iterative algorithms where the probability of sampling a data point $z_i$ depends on the value of $z_i$ may be found in Zhao and Zhang~\cite{ZhaZha15} or Needell et al.~\cite{NeeEtal16}---such sampling strategies are also covered by our theory. However, note that $Z_t$ must be independent of the parameter iterates $\{W_s\}_{s<t}$, since if edges exist between $W_t$ and any $Z_s$ such that $s > t$, equation~\eqref{eq:samplingCondition} will not hold. Intuitively, if the sampled data point adapts to current iterates of the parameter vector $W_t$, the algorithm may be prone to over-fitting and may not generalize.

Finally, note that our assumptions do not require the loss function $\ell$ to satisfy conditions such as convexity. In fact, the way we have defined the updates~\eqref{eq:updateEqn} does not require $F$ to be related to $\ell$ in any way. On the other hand, if $F$ is essentially a gradient of $\ell$, as is often the case, Assumption~\ref{AssBound} will be satisfied as long as $\ell$ is Lipschitz in its first argument.
\end{remark}

The output of our estimation algorithm is defined to be an arbitrary function of the $T$ iterates: $W = f(W^{(T)})$. Some common examples appearing in the ERM literature include (i) the mean: $f (W^{(T)}) = \frac{1}{T} \sum_{t=1}^T{W_t}$; (ii) the last iterate: $f (W^{(T)}) = W_T$; or (iii) suffix averaging, and variants thereof~\cite{RakEtal12, ShaZha13}.

\section{Main results}
\label{SecMain}

We now derive an upper bound on $I(S;W)$ for the class of iterative algorithms described in Section~\ref{SecSetting}, from which we obtain bounds on the generalization error.


\subsection{Bound on mutual information}

\begin{theorem}
\label{theorem:bmi}
The mutual information satisfies the bound
\begin{align*}
I(S;W) &\leq \sum_{t=1}^T\frac{d}{2}\log\left( 1 + \frac{\eta_t^2L^2}{d\sigma_t^2}\right).
\end{align*}
\end{theorem}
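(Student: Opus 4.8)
The plan is to bound $I(S;W)$ by $I(S;W^{(T)})$ via the data-processing inequality, since $W = f(W^{(T)})$ is a deterministic function of the full path of iterates, and then to peel the iterates off one at a time using the chain rule of mutual information. Concretely, I would write $I(S;W^{(T)}) = \sum_{t=1}^T I(S; W_t \mid W^{(t-1)})$ and show that each conditional term is at most $\frac{d}{2}\log(1 + \eta_t^2 L^2 / (d\sigma_t^2))$; summing then gives the claim.

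For a single term, I would expand it as a difference of differential entropies, $I(S; W_t \mid W^{(t-1)}) = h(W_t \mid W^{(t-1)}) - h(W_t \mid W^{(t-1)}, S)$. The subtracted term is easy to handle from below: since conditioning only decreases entropy and the Markov structure~\eqref{eq:samplingCondition} makes $W_t$ depend only on $(W_{t-1}, Z_t)$, I get $h(W_t \mid W^{(t-1)}, S) \ge h(W_t \mid W^{(t-1)}, S, Z_t) = h(W_t \mid W_{t-1}, Z_t)$. But given $W_{t-1}$ and $Z_t$ the only randomness in the update~\eqref{eq:updateEqn} is the Gaussian noise $\xi_t$, so this equals $h(\xi_t) = \frac{d}{2}\log(2\pi e\, \sigma_t^2)$.

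The hard part is the upper bound on $h(W_t \mid W^{(t-1)})$. Conditioned on $W^{(t-1)} = w^{(t-1)}$, the deterministic shift $g(w_{t-1})$ drops out and $W_t$ is distributed as $-\eta_t F(w_{t-1}, Z_t) + \xi_t$, a random vector whose entropy I want to control uniformly in $w_{t-1}$. Here I would invoke two facts: (i) among all distributions with a given covariance, the Gaussian maximizes differential entropy, so $h(W_t \mid W^{(t-1)} = w^{(t-1)}) \le \frac12 \log\big((2\pi e)^d \det \Sigma\big)$, where $\Sigma$ is the conditional covariance; and (ii) for fixed trace, $\det \Sigma$ is maximized when $\Sigma$ is a scalar multiple of the identity, i.e.\ $\det\Sigma \le (\mathrm{tr}\,\Sigma / d)^d$ by AM--GM applied to the eigenvalues. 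Since $\xi_t$ is independent of $Z_t$, the trace splits as $\mathrm{tr}\,\Sigma = \eta_t^2\,\mathrm{tr}\,\mathrm{Cov}(F(w_{t-1}, Z_t)) + d\sigma_t^2$, and the boundedness Assumption~\ref{AssBound} gives $\mathrm{tr}\,\mathrm{Cov}(F) = \E\|F - \E F\|_2^2 \le \E\|F\|_2^2 \le L^2$. Combining, $h(W_t \mid W^{(t-1)}) \le \frac{d}{2}\log\big(2\pi e (\eta_t^2 L^2/d + \sigma_t^2)\big)$, uniformly in the conditioning.

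Subtracting the two entropy estimates cancels the $2\pi e$ factors and yields exactly $I(S; W_t \mid W^{(t-1)}) \le \frac{d}{2}\log(1 + \eta_t^2 L^2/(d\sigma_t^2))$; summing over $t$ then finishes the proof. The subtleties I would watch are that step (i) requires the conditional law of $W_t$ to be absolutely continuous, which holds precisely because of the additive continuous noise (cf.\ Remark~\ref{RemNoise}), and that the trace-to-determinant AM--GM step is where the isotropy of $\xi_t$ and the $1/d$ factor appearing in the statement enter.
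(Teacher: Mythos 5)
Your proposal is correct and follows essentially the same route as the paper: data processing, the chain rule over the iterates, and a per-step comparison of the maximum entropy under a second-moment constraint against the entropy of the injected Gaussian noise. The only (immaterial) differences are that you apply the chain rule to $I(S;W^{(T)})$ directly instead of first passing to $I(Z^{(T)};W^{(T)})$ via the Markov chain $S\rightarrow Z^{(T)}\rightarrow W^{(T)}$ --- which lets you bypass Lemmas~\ref{lemma:infoSubsetMarkov}--\ref{lemma:entropyMarkov} by lower-bounding $h(W_t\mid W^{(t-1)},S)$ with $h(\xi_t)$ outright --- and that you derive the max-entropy-under-second-moment fact from the Gaussian covariance bound plus AM--GM rather than citing it.
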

\begin{proof}
\begin{align}
I(S;W)  &= I(S; f(W^{(T)})) \leq I(S; W^{(T)}) \nonumber\\
        &\leq I(Z^{(T)}; W^{(T)}) \nonumber\\
        &= I(Z^{(T)};W_1) +  I(Z^{(T)};W_2 | W_1 ) \nonumber \\
        &\quad + I(Z^{(T)};W_3 | W_1, W_2 ) + \nonumber  \cdots +  I(Z^{(T)};W_T | W^{(T-1)} ) \label{eq:miBar1}  
\end{align}
where the inequality follows from Lemma \ref{lemma:infoSubsetMarkov} and the last equality comes from the chain rule of mutual information. 

For all $t$,
\begin{align}
& I(Z^{(T)};W_t | W^{(t-1)}) \nonumber \\
& \quad = h(W_t | W^{(t-1)}) - h(W_t | W^{(t-1)},Z^{(T)})  \nonumber\\
                            & \quad \stackrel{(a)}{=} h(W_t | W_{t-1}) - h(W_t | W_{t-1},Z_t ) \nonumber\\
                            & \quad = I(W_t;Z_t | W_{t-1}) \nonumber\\ 
                        & \quad \stackrel{(b)}{\leq} \frac{d}{2}\log\left( 1 + \frac{\eta_t^2L^2}{d\sigma_t^2}\right),
\end{align}
where equality $(a)$ follows from Lemma \ref{lemma:dataSamplingMarkov} and Lemma \ref{lemma:entropyMarkov} in the Appendix, whereas inequality $(b)$ follows from Lemma \ref{lemma:infoBound}.

Therefore, Eq. \eqref{eq:miBar1} gives
\begin{align}
I(S;W^{(T)}) &\leq \sum_{t=1}^T\frac{d}{2}\log\left( 1 + \frac{\eta_t^2L^2}{d\sigma_t^2}\right).
\end{align}
We may obtain bounds without a log term by using the fact that $\log(1 + x) \leq \frac{x}{\sqrt{1+x}} < x$, $\forall x > 0$.  
\end{proof}


\subsection{Consequences} 
\label{sec:bound_on_generalization_error}



We now use this bound on mutual information from Theorem~\ref{theorem:bmi} to derive bounds on the generalization error, first in expectation and then with high probability. The first bound follows directly from Theorem~\ref{theorem:bmi} and Lemma~\ref{LemXuRag}:

\begin{corollary} [Bound in expectation]
\label{CorExpect}
The generalization error of our class of iterative algorithms is bounded by
\begin{align}
|\text{gen}(\mu,P_{W|S})| \leq \sqrt{\frac{R^2}{n}\sum_{t=1}^T\frac{\eta_t^2L^2}{\sigma_t^2}}.
\label{eq:expectedBoundOnGeneralization}
\end{align}
\end{corollary}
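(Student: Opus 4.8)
The plan is to combine the two results already established in the excerpt. Lemma~\ref{LemXuRag} controls the generalization error by $\sqrt{(2R^2/n)\,I(S;W)}$, while Theorem~\ref{theorem:bmi} supplies a per-iterate upper bound on $I(S;W)$. So the first step is simply to insert the mutual-information bound into the square root, which produces a bound of the form $\sqrt{(2R^2/n)\sum_{t=1}^T (d/2)\log(1 + \eta_t^2 L^2/(d\sigma_t^2))}$.

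The second step is to clean up this expression by removing the logarithm and the dimension dependence. Here I would invoke the elementary inequality $\log(1+x) \le x$, valid for every $x > 0$ --- exactly the bound flagged at the close of the proof of Theorem~\ref{theorem:bmi}. Applying it termwise with $x = \eta_t^2 L^2/(d\sigma_t^2)$ replaces each summand $(d/2)\log(1 + \eta_t^2 L^2/(d\sigma_t^2))$ by $\eta_t^2 L^2/(2\sigma_t^2)$; note that the dimension $d$ cancels cleanly in this step. The factor of $1/2$ produced here then cancels against the $2$ inside the square root, and I arrive at $\sqrt{(R^2/n)\sum_{t=1}^T \eta_t^2 L^2/\sigma_t^2}$, which is precisely the claimed bound.

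There is no genuine obstacle: the corollary is an immediate consequence of chaining Lemma~\ref{LemXuRag} with Theorem~\ref{theorem:bmi}, and the only substantive content is the one-line algebraic simplification. The points that warrant a moment's care are verifying that the argument of the logarithm is positive so that $\log(1+x)\le x$ applies (which holds since $\eta_t, \sigma_t, L > 0$), and tracking the constants so that the factor of $2$ and the resulting $1/2$ cancel to leave $R^2/n$ rather than $2R^2/n$ under the root. I expect the whole argument to occupy only a couple of lines.
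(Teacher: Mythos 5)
Your proposal is correct and follows exactly the paper's route: the corollary is obtained by substituting the bound of Theorem~\ref{theorem:bmi} into Lemma~\ref{LemXuRag} and then applying $\log(1+x)\le x$ termwise, which is precisely the simplification flagged at the end of the proof of Theorem~\ref{theorem:bmi}. Your constant tracking (the cancellation of $d$ and of the factor $2$) matches the stated bound.
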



Similarly, Theorem 3 in Xu and Raginsky~\cite{XuRag17} implies a generalization error bound that holds with high probability:

\begin{corollary}\label{CorHP} [High-probability bound]
Let $I(S;W) \leq \epsilon$. Then by Theorem \ref{theorem:bmi}, $\epsilon$ can be equal to $\sum_{t=1}^T\frac{d}{2}\log\left( 1 + \frac{\eta_t^2L^2}{d\sigma_t^2}\right)$.
For any $\alpha > 0$ and $0 < \beta \leq 1$, if $n > \frac{8 R^2 }{\alpha^2}\left( \frac{\epsilon}{\beta} + \log(\frac{2}{\beta})\right)$, we have
\begin{align} 
\mprob_{S,W} \left(|L_{\mu}(W) - L_S(W) | > \alpha \right) \leq \beta,
\label{eq:pacResults}
\end{align}
where the probability is with respect to $S \sim \mu^{\otimes n}$ and $W$.
\end{corollary}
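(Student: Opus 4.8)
The plan is to obtain the PAC bound by invoking the high-probability generalization result of Xu and Raginsky (their Theorem 3), which bounds the tail probability $\mprob_{S,W}(|L_\mu(W) - L_S(W)| > \alpha)$ directly in terms of the input-output mutual information $I(S;W)$ under Assumption 1, and then substituting the bound furnished by Theorem~\ref{theorem:bmi}. Concretely, once we know $I(S;W) \le \epsilon$ with $\epsilon = \sum_{t=1}^T \frac{d}{2}\log(1 + \frac{\eta_t^2 L^2}{d\sigma_t^2})$, the sample-complexity condition $n > \frac{8R^2}{\alpha^2}(\frac{\epsilon}{\beta} + \log\frac{2}{\beta})$ is exactly the hypothesis required by that theorem, and the conclusion~\eqref{eq:pacResults} follows immediately.

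For completeness, I would sketch why such a bound holds. The core device is a change of measure between the true joint law $\mprob_{S,W}$ and the product of the marginals $\mprob_S \otimes \mprob_W$, whose relative entropy is precisely $D(\mprob_{S,W} \,\|\, \mprob_S \otimes \mprob_W) = I(S;W)$. First I would work under the decoupled measure $\mprob_S \otimes \mprob_W$: conditioning on $W$, the empirical risk $L_S(W) = \frac1n \sum_{i=1}^n \ell(W, z_i)$ is an average of $n$ independent copies of the $R$-sub-Gaussian random variable $\ell(W, Z)$ (Assumption 1), so $L_\mu(W) - L_S(W)$ is $\frac{R}{\sqrt n}$-sub-Gaussian. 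A two-sided sub-Gaussian tail bound then gives $(\mprob_S \otimes \mprob_W)(|L_\mu(W) - L_S(W)| > \alpha) \le 2\exp(-\frac{n\alpha^2}{2R^2})$, which is exponentially small in $n$.

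Next I would transfer this estimate to the joint law. Writing $B = \{|L_\mu(W) - L_S(W)| > \alpha\}$ for the bad event, a Donsker--Varadhan-type change-of-measure inequality yields a bound of the schematic form $\mprob_{S,W}(B) \le \frac{I(S;W) + \log 2}{\log(1/(\mprob_S \otimes \mprob_W)(B))}$, so that a small probability under the product measure together with a small $I(S;W)$ forces $\mprob_{S,W}(B)$ to be small. Substituting the sub-Gaussian tail estimate for $(\mprob_S\otimes \mprob_W)(B)$, replacing $I(S;W)$ by $\epsilon$, and imposing $\mprob_{S,W}(B) \le \beta$ produces a lower bound on $n$ of exactly the stated form.

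The main obstacle is bookkeeping of constants rather than any conceptual difficulty: matching the factor of $8$ and the additive $\log(2/\beta)$ term in the threshold requires carefully combining the factor of $2$ from the two-sided tail with the denominator of the change-of-measure inequality and solving the resulting inequality for $n$. Since all of this is already carried out in Xu and Raginsky's Theorem 3, the only genuinely new ingredient here is the substitution of the mutual-information bound from Theorem~\ref{theorem:bmi}; everything else is a direct appeal to their result.
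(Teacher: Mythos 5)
Your proposal matches the paper's own (very brief) justification exactly: the corollary is obtained by directly invoking Theorem 3 of Xu and Raginsky and substituting the mutual information bound from Theorem~\ref{theorem:bmi} as the value of $\epsilon$. The additional sketch you give of the decoupling and change-of-measure argument behind that theorem is correct background but not something the paper reproduces.
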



\section{Examples}
\label{SecExamples}

We now apply the corollaries in Section~\ref{sec:bound_on_generalization_error} to obtain generalization error bounds for various algorithms.

\subsection{SGLD}

As mentioned earlier, sampling the data points uniformly and setting $g(w) = w$ and $F(w,z) = \nabla_w \ell(w,z)$ corresponds to the SGLD algorithm. Common experimental practices for SGLD are as follows~\cite{WelTeh11}:
\begin{enumerate}
    \item the noise variance is set to be $\sigma_t = \sqrt{\eta_t}$,
    \item the algorithm is run for $K$ epochs; i.e., $T = nK$,
    \item for a constant $c > 0$, the stepsizes are $\eta_t = \frac{c}{t}$.
\end{enumerate}

\subsubsection*{High-probability bounds}

For a given choice of $\{\beta, \alpha\}$, taking $n \geq \frac{64 R^4 }{\alpha^4} \left(\log(\frac{2}{\beta})\right)^2$ ensures inequality~\eqref{eq:pacResults}, provided that we run $ K \leq \frac{1}{ne}\left(\frac{2 }{\beta} ^ {\frac{2(\sqrt{ n}-1 ) \beta }{cL^2} }\right) $ epochs. For more details, see Lemma~\ref{lemma:epochBound} in Appendix~\ref{AppOptSGLD}.

\subsubsection*{Bounds in expectation}

Using the identity \mbox{$\sum_{t=1}^T\frac{1}{t} \leq \log(T)+1$}, we obtain the following bound:
\begin{align*}
    |\text{gen}(\mu,\mprob_{W|S})| &\leq \frac{RL}{\sqrt{n}}\sqrt{\sum_{t=1}^T\eta_t} \leq \frac{RL}{\sqrt{n}}\sqrt{c\log T+c}.
\end{align*}
Note that Mou et al.~\cite{MouEtal17} achieve a tighter bound on generalization error of the order $\mathcal{O}\left(\frac{1}{n}\right)$, but their bound is only applicable to the last iterate $W_T$ of SGLD and a uniform sampling strategy. 


\subsubsection*{Convex risk minimization}

If the loss function $\ell(w,z)$ is convex in its first argument for every $w$, we may also bound the excess risk of the learning algorithm. 
Recall the bound~\eqref{EqnGenOpt} and the definition of the optimization error. It may be shown (cf.\ Lemma~\ref{claim:ermsgld} in Appendix~\ref{AppOptSGLD}) that when $(\eta_t, \sigma_t) = (\eta, \sigma)$ and $W = \frac{1}{T}\sum_{t=1}^{T} W_t$, the optimization error of SGLD satisfies
\begin{align}
\epsilon_{\text{opt}}^{W} \leq \frac{G^2}{2\eta T} +  \frac{\eta}{2} L^2 + \frac{d\sigma^2}{2\eta},
\label{eq:optimizationError}
\end{align}
where $G = \sup_{w,S} \|w_{0} - w^*_S\|_2$. By inequalities~\eqref{EqnGenOpt}, \eqref{eq:expectedBoundOnGeneralization}, and~\eqref{eq:optimizationError}, we then have
\begin{align*}
\E[L_S[W]]   &\leq L(w^*) + \frac{G^2}{2\eta T} +  \frac{\eta}{2} L^2 + \frac{d\sigma^2}{2\eta} + \frac{R\sqrt{T}}{\sqrt{n}}\frac{\eta L}{ \sigma }.
\end{align*}
Setting $\sigma = \frac{G}{\sqrt{dT}} $ and  $\eta = \sqrt{\frac{G^2}{TL(\frac{L}{2}+ \frac{R\sqrt{d}T}{\sqrt{n}G})} }$, we obtain
\begin{align*}
    \E[L_S[W]] - L(w^*) &\leq 2GL\sqrt{  \frac{1}{2T} + \frac{\sqrt{d}}{\sqrt{n}}\frac{R }{ GL } }.
\end{align*}


\subsection{Perturbed SGD}

Due to the requirement that an independent noise term $\xi_t$ is present in each update, our results on generalization error may not be applied to SGD. On the other hand, our framework does apply to \emph{noisy} versions of SGD, which have recently drawn interest in the optimization literature due to their ability to escape saddle points efficiently~\cite{GeEtal15, JinEtal17}. For a stepsize parameter $\eta > 0$, updates of the perturbed SGD algorithm take the following form~\cite{GeEtal15}:
\begin{equation}
\label{EqnNoisySGD}
W_t = W_{t-1} - \eta \left(\nabla_w \ell(W_{t-1}, Z_t) + \xi_t\right),
\end{equation}
where $\xi_t$ are i.i.d.\ noise terms sampled uniformly from the unit sphere. Hence, noise is added to each gradient. Unfortunately, our techniques cannot be applied to this exact setting because $\xi_t$ has a degenerate distribution concentrated on the sphere. For large enough $d$, choosing $\xi_t$ on the unit sphere is almost equivalent to choosing it inside the unit ball. If $\xi_t$ is chosen uniformly in the unit ball (cf.\ the perturbed SGD formulation in Jin et al.~\cite{JinEtal17}), our methods yield the following bound:
\begin{equation}\label{EqnNoisySGDBound}
I(S; W) \leq T d\log (1+L). 
\end{equation}
This is because $\|W_t - W_{t-1}\|_2 \le \eta(L+1)$, so we may bound $h(W_t | W_{t-1})$ by the entropy of the uniform distribution on the $d$-dimensional ball of radius $\eta(L+1)$. Also, $h(W_t| W_{t-1}, Z_t)$ is simply the entropy of the uniform distribution on the $d$-dimensional ball of radius $\eta$. This shows that $I(W_{t}; Z_t | W_{t-1}) \leq d\log (1+L)$, so
\begin{equation*}
I(W; S) \leq I(W^{(T)}; S) \leq I(W^{(T)}; Z^{(T)}) \leq Td\log(1+L).
\end{equation*}


\subsection{Noisy momentum}

In this section, we show how we can develop bounds for momentum-like algorithms in addition to SGLD. We consider an algorithm similar to the SGHMC algorithm \cite{CheEtal14}. Every iteration $t$ involves an extra parameter vector $V_t$, which represents the ``velocity'' of $W_t$. We analyze a modified SGHMC algorithm, where we add the (independent and Gaussian) noise $\xi'_{t}$ to the velocity, as well. This leads to the update equations
\begin{align}
\begin{aligned}
    V_{t} &= \gamma_t V_{t-1} + \eta_t \nabla_w \ell(W_{t-1},Z_{t}) + \xi'_{t}, \\
    W_{t} 
            &= W_{t-1} - \gamma_t V_{t-1} - \eta_t \nabla_w \ell(W_{t-1},Z_t)  + \xi''_{t},
\end{aligned}
\label{eq:momentum}
\end{align}
or in matrix form,
\begin{align*}
 \begin{bmatrix} V_{t} \\  W_{t}\end{bmatrix}     &=   \begin{bmatrix}\gamma_t & 0 \\
-\gamma_t & 1
\end{bmatrix}
\begin{bmatrix} V_{t-1} \\  W_{t-1}\end{bmatrix} +  \eta_t\begin{bmatrix}  \nabla_w \ell(W_{t-1},Z_t) \\ - \nabla_w \ell(W_{t-1},Z_t) \end{bmatrix} + \begin{bmatrix} \xi'_{t} \\  \xi''_{t} \end{bmatrix}.
\end{align*}
Thus, we may recast the updates in the framework of our paper by treating $(V_t,W_t)$ as a single parameter vector in $\mathbb{R}^{2d}$, with
\begin{align*}
g(V_{t-1},W_{t-1}) & = \begin{bmatrix}\gamma_t & 0 \\
-\gamma_t & 1
\end{bmatrix}
\begin{bmatrix} V_{t-1} \\  W_{t-1}\end{bmatrix}, \quad \text{and} \\
F\big((V_{t-1},W_{t-1}),Z_t\big) & = \begin{bmatrix}  \nabla_w \ell(W_{t-1},Z_t) \\ -\nabla_w \ell(W_{t-1},Z_t) \end{bmatrix}.
\end{align*}
Note that if the gradients are upper-bounded by $L$, we have $\sup_{v, w \in \scriptW, z \in \scriptZ} \|F((v,w),z)\|_2 \leq \sqrt{2}L$.
Using Theorem \ref{theorem:bmi}, we then arrive at the following bound:
\begin{align*}
I(S;W) &\leq \sum_{t=1}^T\frac{2d}{2}\log\left( 1 + \frac{\eta_t^22L^2}{2d\sigma_t^2}\right).
\end{align*}
Note that it is twice the bound on the mutual information appearing in Theorem~\ref{theorem:bmi}. We may then apply the results in Section~\ref{sec:bound_on_generalization_error} to obtain bounds on the generalization error:\begin{align*}
|\text{gen}(\mu,P_{W|S})| \leq \sqrt{\frac{2R^2}{n}\sum_{t=1}^T\frac{\eta_t^2L^2}{\sigma_t^2}}.
\label{eq:expectedBoundOnGeneralization}
\end{align*}


\subsection{Accelerated gradient descent}

Finally, we consider a noisy version of the accelerated gradient descent method of Nesterov~\cite{Nes83}, where we again add independent noise to both the velocity and parameter vectors at each iteration. This leads to the update equations
\begin{align*}
\begin{aligned}
    V_{t+1} &= \gamma V_t +  \eta_t \nabla_w \ell(W_t - \gamma_t V_t,Z_t) + \xi'_{t+1}, \\
    W_{t+1} &= W_t -   V_{t+1} + \xi''_{t+1} + \xi'_{t+1}.
\end{aligned}
\end{align*}
We again consider $(V_t,W_t)$ as a single parameter vector in $\mathbb{R}^{2d}$.
Compared with the updates~\eqref{eq:momentum}, we see that the only difference is that the point where we take the gradient has changed. Therefore, we obtain the same bound on the $F\big((V_{t-1},W_{t-1}),Z_t\big)$ as in the case of noisy momentum:
$\sup_{v, w \in \scriptW, z \in \scriptZ} \|F((v,w),z)\|_2 \leq \sqrt{2}L$.
This leads to the same upper bound on the mutual information (and generalization error) as in the previous subsection. 




\section{Conclusion}

In this paper, we have demonstrated that mutual information is a very effective tool for bounding the generalization error of a large class of iterative ERM algorithms. The simplicity of our analysis is due to properties such as the data processing inequality and the chain rule of mutual information. However, entropy and mutual information also have certain shortcomings that limit the scope of our analysis, particularly concerning the sensitivity of entropy with respect to degenerate random variables. In some instances, mutual information-based bounds become very weak or even inapplicable. For example, if we were to analyze the SGD algorithm rather than SGLD, or add noise that is degenerate, such as the uniform distribution on a sphere~\cite{GeEtal15}, the mutual information $I(W;S)$ would be $+\infty$, leading to meaningless generalization error bounds. It would be interesting to develop information-theoretic strategies that could bound the generalization error for such algorithms, as well. Finally, note that we have only provided upper bounds for the generalization error---having a large $I(W;S)$ does not necessarily mean that an algorithm is overfitting, since our upper bound might be loose. Deriving lower bounds on the generalization error appears to be a challenging problem that could benefit from an information-theoretic approach, as well.



\enlargethispage{-1.2cm} 


\bibliographystyle{plain}
\bibliography{ref}

\begin{thebibliography}{10}

\bibitem{BouEtal13}
S.~Boucheron, G.~Lugosi, and P.~Massart.
\newblock {\em Concentration Inequalities: A Nonasymptotic Theory of
  Independence}.
\newblock Oxford University Press, 2013.

\bibitem{BouEli02}
O.~Bousquet and A.~Elisseeff.
\newblock Stability and generalization.
\newblock {\em Journal of Machine Learning Research}, 2:499--526, March 2002.

\bibitem{CheEtal14}
T.~Chen, E.~Fox, and C.~Guestrin.
\newblock Stochastic gradient {H}amiltonian {M}onte {C}arlo.
\newblock In E.~P. Xing and T.~Jebara, editors, {\em Proceedings of the 31st
  International Conference on Machine Learning}, volume~32 of {\em Proceedings
  of Machine Learning Research}, pages 1683--1691. PMLR, 2014.

\bibitem{EliEtal05}
A.~Elisseeff, T.~Evgeniou, and M.~Pontil.
\newblock Stability of randomized learning algorithms.
\newblock {\em Journal of Machine Learning Research}, 6:55--79, December 2005.

\bibitem{GeEtal15}
R.~Ge, F.~Huang, C.~Jin, and Y.~Yuan.
\newblock Escaping from saddle points---{O}nline stochastic gradient for tensor
  decomposition.
\newblock In {\em Conference on Learning Theory}, pages 797--842, 2015.

\bibitem{HarEtal16}
M.~Hardt, B.~Recht, and Y.~Singer.
\newblock Train faster, generalize better: {S}tability of stochastic gradient
  descent.
\newblock In M.~F. Balcan and K.~Q. Weinberger, editors, {\em Proceedings of
  The 33rd International Conference on Machine Learning}, volume~48, pages
  1225--1234, New York, New York, USA, 20--22 Jun 2016. PMLR.

\bibitem{JinEtal17}
C.~Jin, R.~Ge, P.~Netrapalli, S.~M. Kakade, and M.~I. Jordan.
\newblock How to escape saddle points efficiently.
\newblock {\em arXiv preprint:1703.00887}, 2017.

\bibitem{Lon16}
B.~London.
\newblock Generalization bounds for randomized learning with application to
  stochastic gradient descent.
\newblock In {\em NIPS Workshop on Optimizing the Optimizers}, 2016.

\bibitem{MouEtal17}
W.~Mou, L.~Wang, X.~Zhai, and K.~Zheng.
\newblock Generalization bounds of {SGLD} for non-convex learning: {T}wo
  theoretical viewpoints.
\newblock {\em CoRR}, 2017.

\bibitem{MukEtal06}
S.~Mukherjee, P.~Niyogi, T.~Poggio, and R.~Rifkin.
\newblock Learning theory: {S}tability is sufficient for generalization and
  necessary and sufficient for consistency of empirical risk minimization.
\newblock {\em Advances in Computational Mathematics}, 25(1):161--193, Jul
  2006.

\bibitem{NeeEtal16}
D.~Needell, N.~Srebro, and R.~Ward.
\newblock Stochastic gradient descent, weighted sampling, and the randomized
  {K}aczmarz algorithm.
\newblock {\em Mathematical Programming}, 155(1--2):549--573, January 2016.

\bibitem{Nes83}
Yurii Nesterov.
\newblock {A method of solving a convex programming problem with convergence
  rate $O(1/\sqrt{k})$}.
\newblock {\em Soviet Mathematics Doklady}, 27:372--376, 1983.

\bibitem{RakEtal12}
A.~Rakhlin, O.~Shamir, and K.~Sridharan.
\newblock Making gradient descent optimal for strongly convex stochastic
  optimization.
\newblock In {\em ICML}, 2012.

\bibitem{RusZou16}
D.~Russo and J.~Zou.
\newblock Controlling bias in adaptive data analysis using information theory.
\newblock In A.~Gretton and C.~C. Robert, editors, {\em Proceedings of the 19th
  International Conference on Artificial Intelligence and Statistics},
  volume~51 of {\em Proceedings of Machine Learning Research}, pages
  1232--1240. PMLR, 09--11 May 2016.

\bibitem{ShaBen14}
S.~Shalev-Shwartz and S.~Ben-David.
\newblock {\em Understanding Machine Learning: From Theory to Algorithms}.
\newblock Cambridge University Press, New York, NY, USA, 2014.

\bibitem{ShaEtal10}
S.~Shalev-Shwartz, O.~Shamir, N.~Srebro, and K.~Sridharan.
\newblock Learnability, stability and uniform convergence.
\newblock {\em Journal of Machine Learning Research}, 11:2635--2670, December
  2010.

\bibitem{ShaZha13}
O.~Shamir and T.~Zhang.
\newblock Stochastic gradient descent for non-smooth optimization:
  {C}onvergence results and optimal averaging schemes.
\newblock In {\em International Conference on Machine Learning}, pages 71--79,
  2013.

\bibitem{Vap98}
V.~N. Vapnik.
\newblock {\em Statistical Learning Theory}.
\newblock Wiley-Interscience, 1998.

\bibitem{WelTeh11}
M.~Welling and Y.~W. Teh.
\newblock Bayesian learning via stochastic gradient {L}angevin dynamics.
\newblock In {\em Proceedings of the 28th International Conference on Machine
  Learning (ICML-11)}, pages 681--688, 2011.

\bibitem{XuRag17}
A.~Xu and M.~Raginsky.
\newblock Information-theoretic analysis of generalization capability of
  learning algorithms.
\newblock In I.~Guyon, U.~V. Luxburg, S.~Bengio, H.~Wallach, R.~Fergus,
  S.~Vishwanathan, and R.~Garnett, editors, {\em Advances in Neural Information
  Processing Systems}, pages 2521--2530. Curran Associates, Inc., 2017.

\bibitem{ZhaZha15}
P.~Zhao and T.~Zhang.
\newblock Stochastic optimization with importance sampling for regularized loss
  minimization.
\newblock In {\em Proceedings of the 32Nd International Conference on
  International Conference on Machine Learning - Volume 37}, ICML'15, pages
  1--9, 2015.

\end{thebibliography}

\begin{appendix}

\section{Proofs of supporting lemmas to Theorem~\ref{theorem:bmi}}

We now prove the lemmas employed in the proof of Theorem~\ref{theorem:bmi}.

\begin{figure*}[h]
\centering
\includegraphics[width=0.7\textwidth]{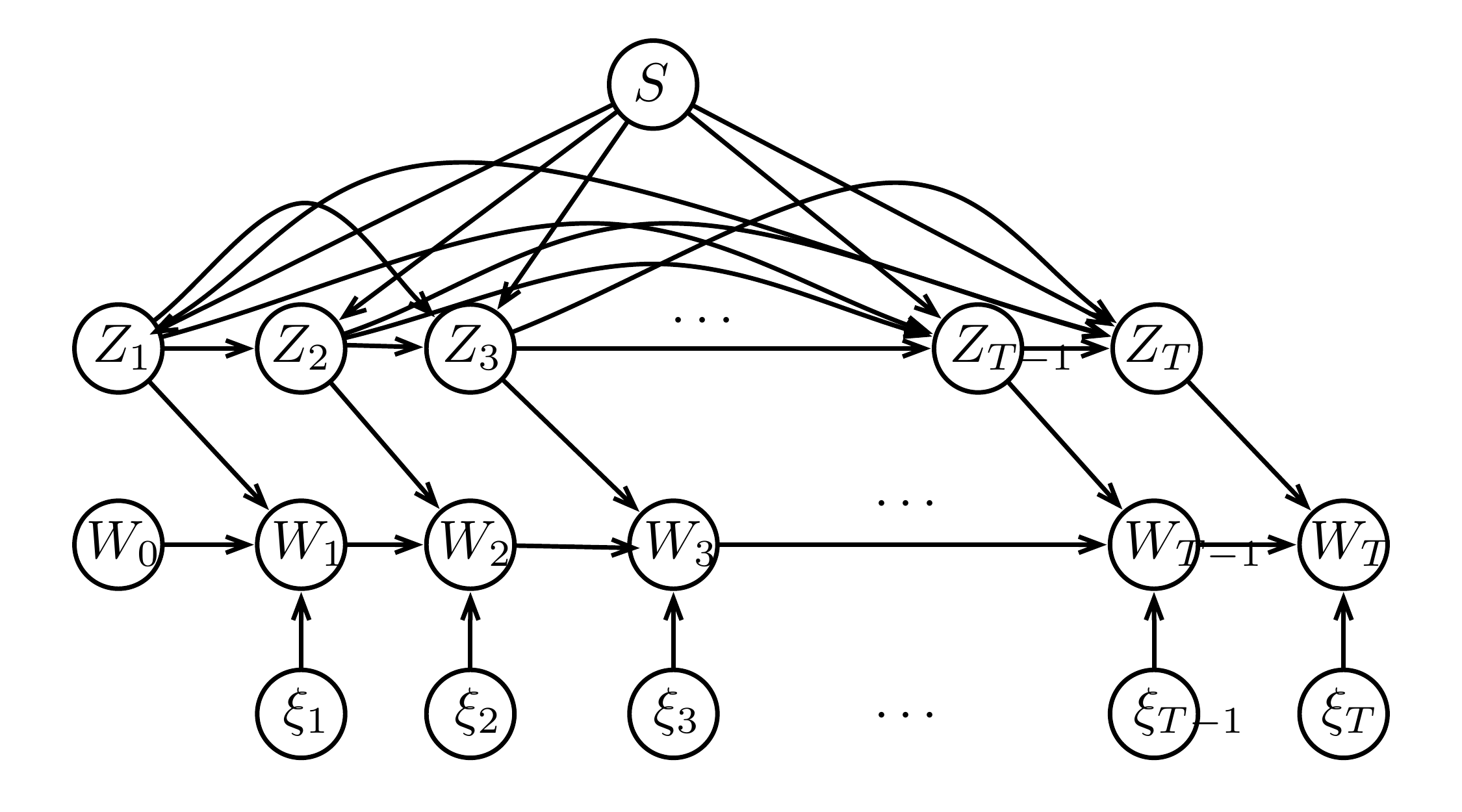}
\caption{Directed graphical model illustrating dependencies between data set $S$, samples $\{Z_t\}$, parameter iterates $\{W_t\}$, and noise vectors $\{\xi_t\}$.}
\label{fig:markov}
\end{figure*}

\begin{lemma}
$I(S;W) \leq I(Z^{(T)};W^{(T)})$.
\label{lemma:infoSubsetMarkov}
\end{lemma}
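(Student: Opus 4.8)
The plan is to prove the bound by two successive applications of the data processing inequality, following the two edges in the dependency chain $S \to Z^{(T)} \to W^{(T)} \to W$ induced by our class of algorithms.

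First I would dispose of the output map. Since $W = f(W^{(T)})$ is a deterministic (hence measurable) function of the iterate path, the triple $(S, W^{(T)}, W)$ trivially forms a Markov chain $S \to W^{(T)} \to W$, so the data processing inequality yields $I(S; W) \leq I(S; W^{(T)})$. This reduces the claim to showing $I(S; W^{(T)}) \leq I(Z^{(T)}; W^{(T)})$.

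The substantive step is this second inequality, which again follows from the data processing inequality once I establish the Markov chain $S \to Z^{(T)} \to W^{(T)}$. To do so, I would verify that $W^{(T)}$ is conditionally independent of $S$ given $Z^{(T)}$, i.e.\ that $\mprob(W^{(T)} \mid Z^{(T)}, S)$ carries no residual dependence on $S$. Applying the chain rule for conditional densities gives
\[
\mprob\left(W^{(T)} \mid Z^{(T)}, S\right) = \prod_{t=1}^{T} \mprob\left(W_t \mid W^{(t-1)}, Z^{(T)}, S\right),
\]
and each factor on the right collapses to $\mprob(W_t \mid W_{t-1}, Z_t)$ by the conditional independence relation~\eqref{eq:samplingCondition} (with $W_0$ the fixed initialization handling the $t=1$ term). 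The resulting product depends on $(Z^{(T)}, S)$ only through $Z^{(T)}$, so integrating out $S$ confirms $\mprob(W^{(T)} \mid Z^{(T)}, S) = \mprob(W^{(T)} \mid Z^{(T)})$, which is exactly the Markov property. Chaining the two inequalities then produces $I(S; W) \leq I(S; W^{(T)}) \leq I(Z^{(T)}; W^{(T)})$.

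The main obstacle is the middle step: relation~\eqref{eq:samplingCondition} is stated only for a single iterate $W_{t+1}$ conditioned on the full history, so some care is needed to assemble these per-step statements into the joint factorization displayed above and to argue that the product genuinely has no leftover dependence on $S$. Everything else reduces to invoking the data processing inequality on a Markov chain that has been verified at the level of conditional distributions.
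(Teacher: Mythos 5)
Your proof is correct and takes essentially the same route as the paper: the paper's own proof simply asserts the Markov chain $S \rightarrow Z^{(T)} \rightarrow W^{(T)}$ and points to equality~\eqref{eq:samplingCondition}, which is exactly the chain you verify in detail via the factorization of $\mprob(W^{(T)} \mid Z^{(T)}, S)$ into per-step conditionals. Your extra care in assembling the single-iterate relations into the joint factorization, and in handling the deterministic output map $f$ by a separate application of the data processing inequality, only makes explicit what the paper leaves implicit.
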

\begin{proof}
This follows from the Markov chain
\begin{equation*}
S \rightarrow Z^{(T)} \rightarrow W^{(T)}.
\end{equation*}
See equality~\eqref{eq:samplingCondition}. 
\end{proof}

\begin{lemma}
For all $t$, we have
\begin{equation*}
h(W_t | W^{(t-1)},Z^{(T)}) = h(W_t | W_{t-1},Z_t).
\end{equation*}
\label{lemma:dataSamplingMarkov}
\end{lemma}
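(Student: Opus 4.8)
The plan is to reduce the statement to the conditional independence relation~\eqref{eq:samplingCondition} already established for the model, and then to exploit the fact that conditional differential entropy depends on the conditioning variables \emph{only} through the conditional law of $W_t$. Shifting the index $t+1 \mapsto t$ in~\eqref{eq:samplingCondition} gives
$$\mprob\big(W_t \mid W^{(t-1)}, Z^{(T)}, S\big) = \mprob\big(W_t \mid W_{t-1}, Z_t\big),$$
so the conditional density of $W_t$ given the entire history already collapses onto the pair $(W_{t-1}, Z_t)$. The work that remains is to turn this statement about densities into the claimed equality of conditional entropies, while accounting for the fact that the lemma conditions on $(W^{(t-1)}, Z^{(T)})$ but not on $S$.

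First I would eliminate the extraneous conditioning on $S$. The display above shows that $\mprob(W_t \mid W^{(t-1)}, Z^{(T)}, S)$ does not depend on $S$, so integrating it against the conditional law of $S$ given $(W^{(t-1)}, Z^{(T)})$ leaves it unchanged, yielding
$$\mprob\big(W_t \mid W^{(t-1)}, Z^{(T)}\big) = \mprob\big(W_t \mid W_{t-1}, Z_t\big)$$
for almost every value of the conditioning variables. That such densities exist at all is guaranteed by the continuity of the injected noise noted in Remark~\ref{RemNoise}, which is what makes the differential-entropy manipulations legitimate.

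Next I would substitute this into the definition of conditional differential entropy. Writing the inner (pointwise) entropy $-\int \mprob(w_t \mid w^{(t-1)}, z^{(T)}) \log \mprob(w_t \mid w^{(t-1)}, z^{(T)})\, dw_t$, the equality of densities forces it to equal $-\int \mprob(w_t \mid w_{t-1}, z_t) \log \mprob(w_t \mid w_{t-1}, z_t)\, dw_t$, which is a function of $(w_{t-1}, z_t)$ alone. Averaging over the joint law of $(W^{(t-1)}, Z^{(T)})$ and using that the integrand depends only on $(W_{t-1}, Z_t)$, the outer expectation marginalizes down to the law of $(W_{t-1}, Z_t)$; the result is exactly $h(W_t \mid W_{t-1}, Z_t)$.

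The only genuine subtlety — and the step I would be most careful about — is the removal of $S$ from the conditioning set in the second step, since~\eqref{eq:samplingCondition} carries $S$ whereas the lemma does not; the passage through the conditional law of $S$ must be made explicit. Everything else is the routine observation that differential entropy is a functional of the conditional distribution, combined with the tower property of expectation to discard the irrelevant coordinates of the conditioning vector.
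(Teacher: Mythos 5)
Your proposal is correct and follows essentially the same route as the paper: the paper's proof is simply the observation that~\eqref{eq:samplingCondition} encodes the Markov chain $(W^{(t-2)},Z^{(T)\backslash\{t\}}) \rightarrow (W_{t-1},Z_t) \rightarrow W_t$, which is exactly the collapse of the conditional law that you derive by shifting indices and marginalizing out $S$. Your explicit handling of the removal of $S$ from the conditioning set and the passage from equality of conditional densities to equality of conditional entropies fills in details the paper leaves implicit, but the underlying argument is identical.
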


\begin{proof}
This follows from the Markov chain
\begin{equation*}
(W^{(t-2)},Z^{(T) \backslash \{t\}}) \rightarrow (W_{t-1},Z_t) \rightarrow W_t,
\end{equation*}
where $Z^{(T) \backslash \{t\}} := (Z_1, \hdots, Z_{t-1},Z_{t+1},\hdots,Z_T)$.
See equality~\eqref{eq:samplingCondition}.
\end{proof}

\begin{lemma}
For all $t$, we have
\begin{equation*}
h(W_t | W^{(t-1)}) = h(W_t | W_{t-1}).
\end{equation*}
\label{lemma:entropyMarkov}
\end{lemma}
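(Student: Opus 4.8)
The plan is to recognize that the claimed identity is, up to a standard chain-rule manipulation, the statement that the iterate sequence $(W_t)$ is itself a Markov process. Writing the gap between the two conditional differential entropies as a conditional mutual information gives
\[
h(W_t \mid W_{t-1}) - h(W_t \mid W^{(t-1)}) = I\!\left(W_t; W^{(t-2)} \,\middle|\, W_{t-1}\right) \ge 0,
\]
since $W^{(t-1)} = (W_{t-1}, W^{(t-2)})$ and conditioning on more variables cannot increase entropy. Thus the lemma is equivalent to the vanishing of this conditional mutual information, i.e.\ to the Markov chain $W^{(t-2)} \to W_{t-1} \to W_t$, or equivalently to $\mprob(W_t \mid W^{(t-1)}) = \mprob(W_t \mid W_{t-1})$ almost surely. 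The whole task therefore reduces to establishing this one conditional-independence relation, after which the entropy identity is immediate.

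To establish $\mprob(W_t \mid W^{(t-1)}) = \mprob(W_t \mid W_{t-1})$, I would condition on the fresh sample $Z_t$ and integrate it out:
\[
\mprob(W_t \mid W^{(t-1)}) = \int \mprob(W_t \mid W^{(t-1)}, Z_t = z)\, \mprob(Z_t = z \mid W^{(t-1)})\, dz .
\]
By the conditional independence underlying~\eqref{eq:samplingCondition} (the same mechanism exploited in Lemma~\ref{lemma:dataSamplingMarkov}, after marginalizing the variables that do not enter the update), the first factor collapses to $\mprob(W_t \mid W_{t-1}, z)$. Hence the integral depends on the history $W^{(t-1)}$ only through the conditional sampling law $\mprob(Z_t \mid W^{(t-1)})$, and the argument closes provided I can show this law depends on $W_{t-1}$ alone, i.e.\ $\mprob(Z_t \mid W^{(t-1)}) = \mprob(Z_t \mid W_{t-1})$, equivalently $Z_t \perp W^{(t-2)} \mid W_{t-1}$.

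I expect this last conditional independence, $Z_t \perp W^{(t-2)} \mid W_{t-1}$, to be the crux and the main obstacle. Assumption~\ref{AssSamp} furnishes only the \emph{conditional} statement $Z_t \perp W^{(t-1)} \mid (Z^{(t-1)}, S)$: the sample is agnostic to the iterates once the entire data set $S$ and the sampling history $Z^{(t-1)}$ are fixed. Passing from this to the marginal relation is delicate, because after integrating out $S$ and $Z^{(t-1)}$ a dependence between $Z_t$ and the earlier iterates $W^{(t-2)}$ can re-enter through their shared ancestry --- each $W_s$ with $s \le t-2$ is built from $Z_s$, and $Z_s, Z_t$ are coupled both through $S$ and through the sampling recursion, so conditioning on $W_{t-1}$ alone need not screen this off. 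My plan here is to work directly from the joint factorization encoded by the graphical model of Figure~\ref{fig:markov} and to determine, via d-separation, which paths from $Z_t$ to $W^{(t-2)}$ survive once $W_{t-1}$ is given; I would flag that controlling the paths routed through the common parent $S$ is exactly where the full force of the structural sampling assumptions must be brought to bear, and that pinning down the precise conditions under which $Z_t \perp W^{(t-2)} \mid W_{t-1}$ holds is the most technically sensitive part of the proof.
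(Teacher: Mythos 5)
Your reduction is exactly the one the paper intends: the lemma is equivalent to the Markov property $W^{(t-2)} \rightarrow W_{t-1} \rightarrow W_t$, i.e.\ to $I(W_t; W^{(t-2)} \mid W_{t-1}) = 0$, and the paper's own proof consists of asserting this chain and pointing to~\eqref{eq:samplingCondition}. So up to the last step you are on the paper's route. The difference is that you decline to assert the final conditional independence $Z_t \perp W^{(t-2)} \mid W_{t-1}$ and instead flag it as the crux --- and your suspicion is warranted: as a proof of the stated equality your proposal is incomplete, but the step you are missing does not in fact follow from the paper's assumptions, and the d-separation check you propose fails. In the graphical model of Figure~\ref{fig:markov}, for any $s \le t-2$ the path $Z_t \leftarrow S \rightarrow Z_s \rightarrow W_s$ (and, under sampling without replacement, also $Z_t \leftarrow Z_s \rightarrow W_s$) contains no collider and is not blocked by the conditioning set $\{W_{t-1}\}$, since $S$ and $Z^{(t-1)}$ are marginalized out rather than conditioned on. Concretely, with $n=2$ and cyclic sampling one has $Z_3 = Z_1$, and $W_1$ is a noisy observation of $F(W_0,Z_1)$ that $W_2$ does not screen off; more generally the iterates are driven by samples sharing the common latent parent $S$, so $(W_t)$ is a hidden-Markov-type process rather than a Markov chain. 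Equality~\eqref{eq:samplingCondition} gives the Markov structure only \emph{conditionally on} $Z^{(T)}$ and $S$, and integrating those out is precisely where the dependence re-enters, as you anticipated.

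What saves the argument (and Theorem~\ref{theorem:bmi}) is that only one direction is ever used: since conditioning cannot increase differential entropy,
\begin{equation*}
h(W_t \mid W^{(t-1)}) \le h(W_t \mid W_{t-1})
\end{equation*}
holds unconditionally, and the quantity appearing in the theorem is $h(W_t \mid W^{(t-1)}) - h(W_t \mid W^{(t-1)}, Z^{(T)})$, so replacing the first term by the larger $h(W_t \mid W_{t-1})$ only weakens the bound in the harmless direction. Your identity $h(W_t \mid W_{t-1}) - h(W_t \mid W^{(t-1)}) = I(W_t; W^{(t-2)} \mid W_{t-1}) \ge 0$ already delivers this inequality with no further work; that is the statement you should prove and use. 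Establishing the equality as literally stated would require an extra assumption (e.g.\ that the $Z_t$ are mutually independent and independent of the earlier iterates unconditionally, which fails when sampling repeatedly from a fixed finite data set), and no amount of path-chasing in Figure~\ref{fig:markov} will produce it from Assumption~\ref{AssSamp} alone.
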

\begin{proof}
This follows from the Markov chain
\begin{equation*}
W^{(t-2)} \rightarrow W_{t-1} \rightarrow W_t.
\end{equation*}
See equality~\eqref{eq:samplingCondition}.
\end{proof}

\begin{lemma}
For all $t$, we have
\begin{equation*}
I(W_t;Z_t | W_{t-1}) \leq \frac{d}{2}\log\left( 1 + \frac{\eta_t^2L^2}{d\sigma_t^2}\right).
\end{equation*}
\label{lemma:infoBound}
\end{lemma}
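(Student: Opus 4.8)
The plan is to expand the conditional mutual information as a difference of two conditional differential entropies,
\[
I(W_t; Z_t \mid W_{t-1}) = h(W_t \mid W_{t-1}) - h(W_t \mid W_{t-1}, Z_t),
\]
and to bound each piece separately. The second term is exact and easy: conditioned on both $W_{t-1} = w$ and $Z_t = z$, the update~\eqref{eq:updateEqn} makes $W_t$ a Gaussian vector with mean $g(w) - \eta_t F(w,z)$ and covariance $\sigma_t^2 I_d$, so $h(W_t \mid W_{t-1}, Z_t) = \frac{d}{2}\log(2\pi e\, \sigma_t^2)$ regardless of the conditioning values. The whole difficulty is therefore concentrated in the first term.

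The first term is the main obstacle, because conditioned only on $W_{t-1} = w$ the variable $W_t$ is a mixture of Gaussians (one component per value of $Z_t$), and this mixture has no closed-form entropy. Here I would invoke the maximum-entropy principle: among all random vectors in $\real^d$ with a fixed covariance $\Sigma$, the Gaussian maximizes differential entropy, so that $h \le \frac{1}{2}\log\bigl((2\pi e)^d \det \Sigma\bigr) \le \frac{d}{2}\log\bigl(2\pi e \cdot \frac{1}{d}\,\mathrm{tr}(\Sigma)\bigr)$, where the last inequality is AM--GM applied to the eigenvalues of $\Sigma$ (equivalently, concavity of $\log\det$). It then suffices to control the trace of the covariance of $W_t$ given $W_{t-1} = w$.

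To bound that trace, I would write $W_t = \bigl(g(w) - \eta_t F(w, Z_t)\bigr) + \xi_t$ and use that $\xi_t$ is independent of $Z_t$, so the conditional covariance splits as $\eta_t^2\,\mathrm{Cov}\bigl(F(w,Z_t)\bigr) + \sigma_t^2 I_d$. By Assumption~\ref{AssBound} we have $\|F(w,z)\|_2 \le L$, hence $\mathrm{tr}\bigl(\mathrm{Cov}(F(w,Z_t))\bigr) = \E[\|F\|^2 \mid W_{t-1}=w] - \|\E[F\mid W_{t-1}=w]\|^2 \le L^2$, giving $\mathrm{tr}\bigl(\mathrm{Cov}(W_t \mid W_{t-1}=w)\bigr) \le \eta_t^2 L^2 + d\sigma_t^2$ uniformly in $w$. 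Plugging into the maximum-entropy bound yields $h(W_t \mid W_{t-1}=w) \le \frac{d}{2}\log\bigl(2\pi e \cdot \frac{\eta_t^2 L^2 + d\sigma_t^2}{d}\bigr)$, and since this holds for every $w$ it survives the outer expectation over $W_{t-1}$. Subtracting the two entropy estimates collapses the $2\pi e$ factors and gives
\[
I(W_t; Z_t \mid W_{t-1}) \le \frac{d}{2}\log\!\left(\frac{\eta_t^2 L^2 + d\sigma_t^2}{d\sigma_t^2}\right) = \frac{d}{2}\log\!\left(1 + \frac{\eta_t^2 L^2}{d\sigma_t^2}\right),
\]
as claimed. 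The two points requiring care are ensuring the trace bound is \emph{uniform} in the conditioning value $w$ (so that averaging over $W_{t-1}$ does not spoil it) and noting that the continuous Gaussian perturbation guarantees all the relevant densities exist, so the differential entropies are finite and the manipulations are legitimate.
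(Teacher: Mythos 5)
Your proposal is correct and follows essentially the same route as the paper: the same decomposition of $I(W_t;Z_t\mid W_{t-1})$ into two conditional entropies, the exact Gaussian computation of $h(W_t\mid W_{t-1},Z_t)$, and a maximum-entropy bound on $h(W_t\mid W_{t-1})$ via a second-moment/covariance control that is uniform in the conditioning value. The only cosmetic difference is that you use the covariance form of the max-entropy inequality (with AM--GM on the eigenvalues), whereas the paper bounds $\E\|W_t-g(w_{t-1})\|_2^2$ directly and invokes the max-entropy Gaussian for a fixed expected squared norm; both yield the identical final bound.
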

\begin{proof}
Note that
\begin{align*}
I(W_t;Z_t | W_{t-1}) = h(W_t | W_{t-1}) - h(W_t | W_{t-1},Z_t).
\end{align*}
We now bound each of the terms in the final expression. First, note that conditioned on $W_{t-1} = w_{t-1}$, we have 
$$W_{t} - g(w_{t-1}) = \eta_tF(w_{t-1},Z_t) + \xi_t.$$
Note that
\begin{equation*}
h(W_t-g(w_{t-1}) \mid W_{t-1}) = h(W_t \mid W_{t-1}=w_{t-1}),
\end{equation*}
since translation does not affect the entropy of a random variable. Also note that the random variables $\xi_t$ and $\eta_tF (w_{t-1}, Z_t)$ are independent, so we can upper-bound the expected squared-norm of $W_{t} - w_{t-1}$, as follows:
\begin{align*}
\E\left(\|W_t - w_{t-1} \|_2^2\right) &= \E\left(\|\eta_tF(w_{t-1}, Z_t)\|_2^2 + \|\xi_t\|_2^2\right)\\
&\leq \eta_t^2 L^2 + d\sigma_t^2,
\end{align*}
where in the last inequality, we have used Assumption~\ref{AssBound} and the fact that $\xi_t \sim {\cal N}(0, \sigma_t^2 I_d)$. Among all random variables $X$ with a fixed $\mathbb E \| X \|_2^2 < C$, the Gaussian distribution $Y \sim {\cal N}\left(0, \sqrt{\frac{C}{d}} I_d\right)$ has the largest entropy, given by
$$ h(Y) = \frac{d}{2} \log \left(\frac{2\pi e C}{d}\right).$$
This implies that
\begin{align*}
h(W_t \mid W_{t-1} = w_{t-1}) &\leq \frac{d}{2} \log \left(2\pi e \frac{\eta_t^2L^2 + d\sigma_t^2}{d}\right).
\end{align*}
Since the above bound holds for all values $w_{t-1}$, we may integrate the bound to conclude that
\begin{align*}
h(W_t | W_{t-1}) \leq \frac{d}{2} \log \left(2\pi e \frac{\eta_t^2L^2 + d\sigma_t^2}{d}\right).
\end{align*}
We also have
\begin{align*}
h(W_t | W_{t-1}, Z_t) &= \nonumber h(W_{t-1} + \eta_t \nabla_w \ell(W_{t-1},Z_t)\\
                   &\quad + \xi_t | W_{t-1}, Z_t)\\
&= h(\xi_t | W_{t-1}, Z_t)\\
&= h(\xi_t).
\end{align*}
This leads to the following desired bound:
\begin{align*}
h(W_t | W_{t-1}) & - h(W_t | Z_t, W_{t-1}) \\
& \leq \frac{d}{2} \log \left(2\pi e \frac{\eta_t^2L^2 + d\sigma_t^2}{d}\right) - \frac{d}{2} \log 2 \pi e \sigma_t^2 \\
&= \frac{d}{2}\log \frac{\eta_t^2L^2 + d\sigma_t^2}{d\sigma_t^2} \\
&= \frac{d}{2} \log \left(1+ \frac{\eta_t^2L^2}{d\sigma_t^2}\right).
\end{align*}
Note that if the noise were non-Gaussian, we would have to replace $\frac{d}{2} \log(2 \pi e \sigma_t^2)$ by the entropy of the noise.
\end{proof}


\section{Details for the SGLD algorithm}
\label{AppOptSGLD}

In this Appendix, we include more details for the derivations concerning SGLD in Section~\ref{SecExamples}.

\subsection{Generalization error bounds}



\begin{lemma} For a given choice of $\{\beta, \alpha\}$, taking $n \geq \frac{64 R^4 }{\alpha^4} (\log(\frac{2}{\beta}))^2$ ensures inequality~\eqref{eq:pacResults}, provided that we run $ K \leq \frac{1}{ne}\left(\frac{2 }{\beta} ^ {\frac{2(\sqrt{ n} -1) \beta }{cL^2}}\right) $ epochs. 
\label{lemma:epochBound}
\end{lemma}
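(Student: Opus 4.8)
The plan is to combine the high-probability bound of Corollary~\ref{CorHP} with the SGLD-specific value of $\epsilon$, and then to use the hypothesis on $n$ twice in complementary ways. First I would specialize Theorem~\ref{theorem:bmi} to the SGLD parameters $\sigma_t^2 = \eta_t$, $\eta_t = c/t$, and $T = nK$. Applying $\log(1+x) \le x$ to each summand gives $\frac{d}{2}\log\!\big(1 + \frac{\eta_t L^2}{d}\big) \le \frac{\eta_t L^2}{2}$, and summing with the harmonic bound $\sum_{t=1}^T \frac{1}{t} \le \log T + 1$ yields the working value $\epsilon = \frac{cL^2}{2}\big(\log(nK)+1\big)$, which certifies $I(S;W) \le \epsilon$ as required to invoke Corollary~\ref{CorHP}.

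Next, by Corollary~\ref{CorHP} it suffices to verify $n > \frac{8R^2}{\alpha^2}\big(\frac{\epsilon}{\beta} + \log\frac{2}{\beta}\big)$. The key observation is that the stated hypothesis $n \ge \frac{64R^4}{\alpha^4}(\log\frac{2}{\beta})^2$ is exactly the statement $\sqrt{n} \ge \frac{8R^2}{\alpha^2}\log\frac{2}{\beta}$, and I would exploit it in two ways. In one direction it directly controls the second term, $\frac{8R^2}{\alpha^2}\log\frac{2}{\beta} \le \sqrt{n}$; rearranged, it gives $\frac{8R^2}{\alpha^2} \le \frac{\sqrt{n}}{\log(2/\beta)}$, which bounds the first term independently of $\alpha$ and $R$ via $\frac{8R^2}{\alpha^2}\frac{\epsilon}{\beta} \le \frac{\sqrt{n}\,\epsilon}{\beta\log(2/\beta)}$. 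Adding the two estimates shows that the requirement of Corollary~\ref{CorHP} is implied by $\frac{\sqrt{n}\,\epsilon}{\beta\log(2/\beta)} + \sqrt{n} \le n$, i.e.\ by the clean inequality $\epsilon \le \beta(\sqrt{n}-1)\log\frac{2}{\beta}$.

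Finally, I would translate this last inequality into the claimed epoch bound by substituting $\epsilon = \frac{cL^2}{2}\big(\log(nK)+1\big)$ and solving for $K$. The condition becomes $\log(nK)+1 \le \frac{2\beta(\sqrt{n}-1)}{cL^2}\log\frac{2}{\beta}$; writing $\log(nK)+1 = \log(enK)$ and exponentiating gives $enK \le \big(\frac{2}{\beta}\big)^{2\beta(\sqrt{n}-1)/(cL^2)}$, which is precisely $K \le \frac{1}{ne}\big(\frac{2}{\beta}\big)^{2(\sqrt{n}-1)\beta/(cL^2)}$.

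I expect the main conceptual obstacle to be this ``double use'' of the single hypothesis on $n$: one must recognize that the PAC condition carries an $\alpha,R$-dependent coefficient $8R^2/\alpha^2$ that has to be eliminated, and that the square-root form of the $n$-hypothesis is tailored both to dominate the $\log(2/\beta)$ term outright and to convert that coefficient into $\sqrt{n}/\log(2/\beta)$, so that the resulting epoch bound is free of $\alpha$ and $R$. Everything else is routine algebra, though I would note the mild positivity requirements $\sqrt{n}-1 > 0$ and $\log(2/\beta) > 0$, both guaranteed for $n > 1$ and $0 < \beta \le 1$.
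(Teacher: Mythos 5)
Your proposal is correct and follows essentially the same route as the paper: bound $I(S;W)$ by $\frac{cL^2}{2}\log(enK)$ via $\log(1+x)\le x$ and the harmonic sum, reduce the PAC condition of Corollary~\ref{CorHP} to $\epsilon \le (\sqrt{n}-1)\beta\log\frac{2}{\beta}$ using $\sqrt{n} \ge \frac{8R^2}{\alpha^2}\log\frac{2}{\beta}$, and solve for $K$. In fact, you make explicit the ``double use'' of the hypothesis on $n$ that the paper leaves implicit when it asserts the result follows from Corollary~\ref{CorHP} once $I(S;W) \le (\sqrt{n}-1)\beta\log\frac{2}{\beta}$ is established.
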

\begin{proof}
If we show that for $K  \leq \frac{1}{ne}\left( \frac{2 }{\beta} ^ {\frac{2(\sqrt{n}-1) \beta }{cL^2} }\right) $, we have $I(S;W) \leq (\sqrt{n}- 1)\beta \log\left(\frac{2}{\beta}\right)$, the proof will follow from Corollary~\ref{CorHP}. We have
\begin{align*}
I(S;W) &\leq \sum_{t=1}^T\frac{\eta L^2}{2} = \sum_{t=1}^T\frac{c L^2}{2t} \leq \frac{cL^2}{2}\log(eT)\\
        &= \frac{cL^2}{2}\log(enK) \\
        &\leq \frac{cL^2}{2}\log\left(\frac{2 }{\beta} ^ {\frac{2(\sqrt{ n} -1) \beta }{cL^2}) }\right)\\
        &=(\sqrt{ n} -1) \beta  \log\left(\frac{2 }{\beta}\right),
\end{align*}
implying the desired result.
\end{proof}

\subsection{Optimization error bounds}

We now derive the bound on the optimization error $\epsilon_{\text{opt}}^W$ of SGLD.

\begin{lemma}
If we run the SGLD algorithm on an $L$-Lipschitz convex loss function for $T$ time steps with parameters $\{\eta,\sigma\}$, we have the following bound on the empirical risk for the average of the iterates: 
\begin{align*}
      \E\left[ L_S\left(\frac{1}{T} \sum_{t=1}^TW_t\right)\right] - L_S(w_S^*) \leq \E\left[\frac{1}{T} \sum_{t=1}^T L_S(W_t)\right] - L_S(w_S^*) \leq \frac{G^2}{2\eta T} +  \frac{\eta}{2} L^2 + \frac{d\sigma^2}{2\eta}.
  \end{align*}
  \label{claim:ermsgld}  
  We follow the same notation as in the rest of the paper.
\end{lemma}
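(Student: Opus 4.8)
The plan is to prove the two inequalities separately. The first, $\E[L_S(\frac{1}{T}\sum_t W_t)] \le \E[\frac{1}{T}\sum_t L_S(W_t)]$, is immediate from Jensen's inequality applied to the convex function $L_S$; here $L_S$ is convex because it is an average of the functions $\ell(\cdot, z_i)$, each convex in its first argument by hypothesis. All the real work is in the second inequality, which is the standard potential/regret analysis for stochastic gradient descent, carried out here with an extra injected-noise term. Throughout I would condition on the data set $S$, so that $L_S$ and $w^*_S$ are deterministic and the only randomness is the sampling $\{Z_t\}$ and the noise $\{\xi_t\}$.

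Writing $g_t := \nabla_w \ell(W_{t-1}, Z_t)$ for the stochastic gradient, the SGLD update is $W_t = W_{t-1} - \eta g_t + \xi_t$, so that $W_t - w^*_S = (W_{t-1} - w^*_S) + (\xi_t - \eta g_t)$. I would track the squared distance to the optimum and expand:
\[
\|W_t - w^*_S\|_2^2 = \|W_{t-1} - w^*_S\|_2^2 - 2\eta \langle W_{t-1} - w^*_S,\, g_t\rangle + 2\langle W_{t-1} - w^*_S,\, \xi_t\rangle + \|\xi_t - \eta g_t\|_2^2.
\]
Taking expectations conditioned on $W_{t-1}$ and stepping through the terms: since $\xi_t$ is independent of $(W_{t-1}, Z_t)$ and mean-zero, the cross term $\E\langle W_{t-1} - w^*_S, \xi_t\rangle$ vanishes, and similarly $\E\|\xi_t - \eta g_t\|_2^2 = \eta^2\,\E\|g_t\|_2^2 + \E\|\xi_t\|_2^2 \le \eta^2 L^2 + d\sigma^2$, using Assumption~\ref{AssBound} to bound the gradient and $\xi_t \sim \N(0,\sigma^2 I_d)$ for the noise.

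The key step is the inner-product term. Because SGLD samples $Z_t$ uniformly from $S$ and, by Assumption~\ref{AssSamp}, independently of $W_{t-1}$, the stochastic gradient is unbiased for the empirical gradient: $\E[g_t \mid W_{t-1}] = \frac{1}{n}\sum_{i=1}^n \nabla_w \ell(W_{t-1}, z_i) = \nabla L_S(W_{t-1})$. Convexity of $L_S$ then gives $\langle W_{t-1} - w^*_S,\, \nabla L_S(W_{t-1})\rangle \ge L_S(W_{t-1}) - L_S(w^*_S)$. Combining these facts, taking full expectations, and rearranging yields the one-step descent inequality
\[
2\eta\big(\E[L_S(W_{t-1})] - L_S(w^*_S)\big) \le \E\|W_{t-1} - w^*_S\|_2^2 - \E\|W_t - w^*_S\|_2^2 + \eta^2 L^2 + d\sigma^2.
\]

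Finally I would sum over $t = 1, \dots, T$: the distance terms telescope to $\E\|W_0 - w^*_S\|_2^2 - \E\|W_T - w^*_S\|_2^2 \le G^2$, using $\|W_0 - w^*_S\|_2 \le G$ and discarding the nonnegative endpoint. Dividing by $2\eta T$ then gives the claimed bound $\frac{G^2}{2\eta T} + \frac{\eta L^2}{2} + \frac{d\sigma^2}{2\eta}$. I expect the difficulty to be bookkeeping rather than conceptual: one must apply the tower property so that unbiasedness and independence are invoked under the correct conditioning, and should flag that the telescoped average is naturally over $W_0, \dots, W_{T-1}$ rather than $W_1, \dots, W_T$ as written — a harmless off-by-one shift (absorbed by reindexing or a boundedness argument on the endpoints) that is worth noting explicitly.
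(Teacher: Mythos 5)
Your proposal is correct and follows essentially the same route as the paper: Jensen's inequality for the first inequality, then the observation that SGLD is SGD with an unbiased stochastic gradient whose per-step second-moment cost is $\eta^2 L^2 + d\sigma^2$, fed into the standard regret/potential analysis. The only difference is that the paper black-boxes that analysis by citing Lemma 14.1 and Theorem 14.8 of Shalev-Shwartz and Ben-David, whereas you derive it from scratch; your explicit handling of the conditioning and of the off-by-one indexing of the averaged iterates is, if anything, more careful than the paper's.
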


\begin{proof}
The first inequality follows from the convexity of the the loss fuction.

We can write the update equation as 
\begin{align*}
W_t &= W_{t-1} - \eta \left(\nabla_w \ell(W_{t-1},Z_t) + \frac{\xi}{\eta}\right)\\
&= W_{t-1} - \eta V_{t-1},
\end{align*}
where $V_t = \nabla_w \ell(W_{t-1},Z_t) + \frac{\xi}{\eta}$.

It is easy to see that $V_t$ is an unbiased estimator of the gradient of the empirical risk; i.e., $\E[V_t | W_t] = \nabla L_S(W_{t})$.
Therefore, SGLD may be seen as a variant of SGD, and we obtain the following bounds on the optimization error for the average of iterates, $W = \frac{1}{T}\sum_{t=1}^T W_t$ (cf.\ Lemma 14.1 and Theorem 14.8 of Shalev-Shwartz and Ben-David \cite{ShaBen14}):
\begin{align}
\epsilon_{\text{opt}}^W \leq \frac{G^2}{2\eta T} + \frac{\eta}{2}(\E[\|V_t \|_2^2]).
\label{EqnSgldOptError}
\end{align}
Moreover, since the noise is independent and the loss function is convex and $L$-Lipschitz, we have
\begin{equation*}
\E[\|V_t\|_2^2] \leq L^2 + d \frac{\sigma^2}{\eta^2}.
\end{equation*}
Combining this bound with inequality~\eqref{EqnSgldOptError} yields the desired result.
\end{proof}

\end{appendix}

\end{document}